\journal{Journal of \LaTeX\ Templates}
\begin{document}

\begin{frontmatter}

\title{Fast Gaussian Process Regression for Big Data}

\author{Sourish Das\fnref{cmiauthor1}}
\author{Sasanka Roy\fnref{isiauthor}}
\author{Rajiv Sambasivan \fnref{cmiauthor1}\corref{mycorrespondingauthor}}

\fntext[cmiauthor1]{Chennai Mathematical Institute}
\fntext[isiauthor]{Indian Statistical Institute}

\cortext[mycorrespondingauthor]{Corresponding author}
\ead{rsambasivan@cmi.ac.in}

\begin{abstract}
Gaussian Processes are widely used for regression tasks. A known limitation in the application of Gaussian Processes to regression tasks is that the computation of the solution requires performing a matrix inversion. The solution also requires the storage of a large matrix in memory. These factors restrict the application of Gaussian Process regression to small and moderate size data sets. We present an algorithm that combines estimates from models developed using subsets of the data obtained in a manner similar to the bootstrap. The sample size is a critical parameter for this algorithm. Guidelines for reasonable choices of algorithm parameters, based on detailed experimental study, are provided. Various techniques have been proposed to scale Gaussian Processes to large scale regression tasks. The most appropriate choice depends on the problem context. The proposed method is most appropriate for problems where an additive model works well and the response depends on a small number of features. The minimax rate of convergence for such problems is attractive and we can build effective models with a small subset of the data. The Stochastic Variational Gaussian Process and the Sparse Gaussian Process are also appropriate choices for such problems. These methods pick a subset of data based on theoretical considerations. The proposed algorithm uses bagging and random sampling. Results from experiments conducted as part of this study indicate that the algorithm presented in this work can be as effective as these methods. 
\end{abstract}

\begin{keyword}
Big Data \sep Gaussian Process \sep Regression
\MSC[2010] 00-01\sep  99-00
\end{keyword}

\end{frontmatter}

\section{Introduction}\label{sec:introduction}
Gaussian Processes (GP) are attractive tools to perform supervised learning tasks on complex datasets on which traditional parametric methods may not be effective. They are also easier to use in comparison to alternatives like neural networks (\cite{Rasmussen06gaussianprocesses}). Gaussian Processes offer some practical advantages over Support Vector Machines (SVM) (\cite{ghahramani_mlss_2011}). They offer uncertainty estimates with predictions. The kernel and regularization parameters can be learned directly from the data instead of using cross validation. Feature selection can be incorporated into the learning algorithm. For regression, exact inference is possible with Gaussian Processes. To apply Gaussian Processes to classification, we need to resort to approximate inference techniques such as Markov Chain Monte Carlo, Laplace Approximation or Variational Inference. Even though exact inference is possible for Gaussian Process regression, the computation of the solution requires matrix inversion. For a dataset of size $n$, the time complexity of matrix inversion is $O(n^3)$. The space complexity associated with storing a matrix of size $n$ is $O(n^2)$. This restricts the applicability of the technique to small or moderate sized datasets.

In this paper we present an algorithm that uses subset selection and ideas borrowed from bootstrap aggregation to mitigate the problem discussed above. Parallel implementation of this algorithm is also possible and can further improve performance.

The rest of this paper is organized as follows: In section \ref{pf},  we discuss the problem context. In section \ref{ps},  we present our solution to the problem. Our solution is based on combining estimators developed on subsets of the data. The selection of the subsets is based on simple random sampling with replacement (similar to what is done in the bootstrap). The size of the subset selected is a key aspect of this algorithm. This is determined empirically. We present two methods to determine the subset size. We present the motivating ideas leading to the final form of the algorithm.  When the model is an additive structure of univariate components, this has attractive implications on the convergence rate (\cite{stone1985additive}). An additive model worked well for the datasets used in this study. Relevant facts from Minimax theory for non-parametric regression,  that are consistent with the experimental results reported in this work, are presented. In section \ref{pw},  we present a brief summary of related work. Applying Gaussian Processes to large datasets has attracted a lot of interest from the machine learning research community. Connecting ideas to research related to the algorithm reported in this work are presented.\\ 
Selecting parameters for an algorithm is an arduous task. However this algorithm has only two important parameters, the subset size and the number of estimators. We provide guidelines to pick these parameters based on detailed experiments across a range of datasets. In section \ref{sec:effect_of_parameters} we provide experimental results that provide insights into the effect of the parameters associated with the algorithm. In section \ref{ards},  we illustrate the application of our algorithm to synthetic and real world data sets. We applied the algorithm developed in this work to data sets with over a million instances. We compare the performance of the proposed method to the Sparse Gaussian Process (\cite{titsias2009variational}) and the Stochastic Variational Gaussian Process (\cite{hensman2013gaussian}). The inputs required by these algorithms are similar to the inputs required for the proposed method and therefore are applicable in a similar context.
We compare the estimates obtained from the reported algorithm with two other popular methods to perform regression on large datasets - Gradient Boosted Trees (using XGBoost, \cite{chen2016xgboost}) and the Generalized Additive Model (GAM)(\cite{friedman2001elements}).Results from experiments performed as part of this study show that accuracies from the proposed method are comparable to those obtained from Gradient Boosted Trees or GAM's. However there are are some distinct advantages to using a Gaussian Process model. A Gaussian Process model yields uncertainty estimates directly whereas methods like Gradient Boosted Trees do not provide this (at least directly). A Gaussian Process model is also directly interpretable in comparison to methods like Gradient Boosted Trees or Neural Networks. Therefore, the proposed method can yield both explanatory and predictive models. It is possible to use stacking (\cite{wolpert1992stacked}) to combine the estimates from the proposed model with those obtained from a competing model (like Gradient Boosted Trees) and obtain higher accuracies. Combining a Gaussian Process solution with XGBoost has been used by \cite{lloyd2014gefcom2012}.\\
In section \ref{dr},  we present the conclusions from this work. The contribution of this work is as follows. The proposed method to perform Gaussian Process regression on large datasets has a very simple implementation in comparison to other alternatives,   with similar levels of accuracy. The algorithm has two key parameters - the subset size and the number of estimators. Detailed guidelines to pick these parameters are provided. The choice of a method to scale Gaussian Process regression to large datasets depends on the characteristics of the problem. This is discussed in section \ref{pw}. The proposed method is most effective for problems where the response depends on a small number of features and the kernel characteristics are unknown. In such cases, exploratory data analysis can be used to arrive at appropriate kernel choices \cite{duvenaud2014automatic}. Additive models may work well for these problems. Appropriate preprocessing like principal component analysis can be used if needed to create additive models. The rate of convergence for additive models is attractive (\cite{stone1982optimal}). This implies that we can build very effective models with a small proportion of samples. Sparse Gaussian Processes see \cite{snelson2005sparse} and Stochastic Variational Gaussian Processes \cite{hensman2013gaussian} are also appropriate for such problems. These require a more complex implementation and may require extensive effort to tune the optimization component of the algorithm (see \cite{hensman2013gaussian}). Results of the experiments conducted as part of this study show that the proposed method can match or exceed the performance of these methods.
\hfill 
\section{Problem Formulation} \label{pf}
A Gaussian Process $y$ with additive noise can be represented as:
\begin{equation}\label{eqn:gp_model}
 y = f(x) + \eta.
\end{equation}
Here :
\begin{itemize}
\item $y$ represents the observed response.
\item $x$ represents an input vector of covariates.
\item $\eta$ represents the noise. The noise terms are assumed to be identical, independently distributed (IID) random variables drawn from a normal distribution with variance $\sigma_{n}^2$.

\item $f$ represents the function being modeled. It is a multivariate normal with mean function $\mu(x)$ and covariance function  $\mathcal{K}(x)$.
\end{itemize}
 
If we want to make a prediction for the value of the response at a test point $X^*$, then the predictive equations are given by (see \cite{Rasmussen2005}):
\small
\begin{equation}
f_{*} | X, y, X_{*}  \sim \mathcal{N} \left(\overline{f}_{*}, cov(f_{*}) \right), \label{eqn:1}\\
\end{equation}
\begin{align}
\begin{split}
\overline{f}_{*}  & = \mathbb{E}[f_{*}| X, y, X_{*}], \\
 & = \mathcal{K}(X_{*},X)[(\mathcal{K}(X,X) + \sigma_{n}^2 \textbf{I}] ^{-1} y,  \label{eqn:2}
 \end{split}
\end{align}

\begin{equation}
\mathbb{V}(f_{*})  =   \mathcal{K}(X_{*}, X_{*}) - \\
{\mathcal{K}(X_{*}, X)[\mathcal{K}(X,X) + \sigma_{n}^2 \textbf{I}]^{-1} \mathcal{K}(X,X_*)}.  \label{eqn:gp_var_est}
\end{equation}
\normalsize
Here:
\begin{itemize}
\item $f_{*}$ is the value of the function at the test point $X_{*}$.
\item $\mathcal{K}(X_*,X)$ represents the covariance between the test point and the training set.
\item $\mathcal{K}(X,X)$ represents the covariance matrix for the training set.
\item \textbf{I} is the identity matrix.
\end{itemize}
Equation \eqref{eqn:2} is the key equation to make predictions. An inspection of equation \eqref{eqn:2} shows that this equation requires the computation of an inverse. For a training dataset of size $n$, computation of the inverse has $\mathcal{O}(n^3)$ time complexity. This is one of the bottle necks in the application of Gaussian Processes. Calculation of the solution also requires the storage of the matrix $(\mathcal{K}(X,X) + \sigma_{n}^2 \textbf{I})$. This is associated with a space complexity of $\mathcal{O}(n^2)$. This is the other bottle neck associated with Gaussian Processes. The uncertainty associated with our prediction is provided by Equation \ref{eqn:gp_var_est}. The covariance $\mathcal{K}(X,X)$ is expressed in terms of a kernel that is appropriate for the modeling task. The kernel is associated with a set of hyper-parameters. These may be known for example if the modeling task has been well studied or unknown if the problem has not been well studied. In this work we treat these kernel parameters as unknown. When the kernel parameters are unknown, these are estimated using maximum likelihood estimation. The marginal log likelihood is given by (see \cite{Rasmussen2005}):

\begin{equation}\label{eqn:ml}
\log\big(y|X\big) = \frac{1}{2}y^T\big(\mathcal{K} + \sigma_{n}^2.\mathbf{I}\big)^{-1}.y  -\frac{1}{2} \log\big|\mathcal{K} + \sigma_{n}^2.\mathbf{I}\big| - \frac{n}{2} \log\big(2\pi\big).
\end{equation}

Using an appropriate optimization technique with Equation \eqref{eqn:ml} as the objective function, the hyper-parameters of the kernel ($\mathcal{K}$) can be determined.   

\section{Proposed Solution} \label{ps}
Since GP regression is both effective and versatile on complex datasets in comparison to parametric methods, a solution to the bottlenecks mentioned above will enable us to apply GP regression to large datasets that are complex. We run into such datasets routinely in this age of big data. Our solution to applying Gaussian Process regression to large data sets is based on developing estimators on smaller subsets of the data. The size of the subset and the desired accuracy are key parameters for the proposed solution. The accuracy is specified in terms of an acceptable error threshold, $\epsilon$. We pick $K$ smaller subsets of size $N_s$ from the original data set (of size $N$) in a manner similar to bootstrap aggregation. We develop a Gaussian Process estimator on each subset. The GP fit procedure includes hyper-parameter selection using the likelihood as the objective function. We want a subset size such that when we combine the estimators developed on the $K$ subsets, we have an estimator that yields a prediction error that is acceptable. The rationale for this approach is based on results from Minimax theory for non-parametric regression that are presented later in this section. To combine estimators, we simply average the prediction from each of the $K$ estimators.

The time complexity for fitting a Gaussian Process regression on the entire dataset of size $N$ is $O(N^3)$. The algorithm presented above requires the fitting of $K$ Gaussian Process regression models to smaller size datasets ($N_s$). Therefore, the time complexity is $O(K.N_{s}^{3})$.

We present two methods to determine the subset size:
\begin{enumerate}
\item Estimating the subset size using statistical inference. For a dataset of size $N$, the subset size is expressed as:
\begin{equation}
N_{s} = N^\delta, where \quad (0 < \delta < 1).
\end{equation}\label{eqn:exp_est_1}
\noindent $\delta$ is a random variable and is determined based on inference of a proportion using a small sample. The details of this method are presented in the next subsection.
\item Estimating the subset size using an empirical estimator. We use the following observations to derive this empirical estimator:
\newtheorem{lemma}{Lemma}
\begin{enumerate}
\item The subset size $N_s$, should be proportional to the size of the dataset, $N$:
\begin{equation*}
N_s \propto N.
\end{equation*}
We posited that as the size of the dataset increases, there is possibly more detail or variations to account for in the model. Therefore larger datasets would require larger sample sizes. 
\item  The sample size $N_s$, should be a decreasing function of the desired error rate ($\epsilon$). This means that decreasing the desired error rate should increase the sample size.
\begin{equation*}
N_s \propto \frac{1}{g(\epsilon)}, 
\end{equation*}

\end{enumerate}
\noindent where $g(\epsilon)$ is an increasing function. Application of the above observations yields the following estimator for sample size:
\begin{equation} \label{eqn:sasanka_estimator}
N_{s} = \frac{N^{\delta(N)}}{g(\epsilon)}.
\end{equation}
Here:
\begin{description}
\item[$\delta(N)$] is a function that characterizes the fact that an increase in $N$ should increase the sample size $N_s$.
\item[$g(\epsilon)$] is a function that characterizes the fact that sample size should increase as $\epsilon$ (desired error level) decreases.
\end{description}
\end{enumerate}
The algorithm is summarized in Algorithm ~\ref{algo_resamp}

\IncMargin{1em}
\begin{algorithm}[ht]
\SetKwData{Sample}{$N_s$}\SetKwData{fsample}{$\hat{f}_i$}\SetKwData{fom}{$f_{resampled}(x)$}\SetKwData{Up}{up}
\SetKwFunction{SampleWithReplacement}{SampleWithReplacement}\SetKwFunction{FitGP}{FitGP}
\SetKwInOut{Input}{input}\SetKwInOut{Output}{output}

\Input{A dataset $\mathcal{D}$ of size N, $\delta$, $K$}

\Output{An estimator f that combines the estimators fitted from resampling}
\BlankLine

\For{$i\leftarrow 1$ \KwTo $K$}{
\tcc{select a sample from $\mathcal{D}$. Two ways of selecting the sample size are presented}
\Sample$\leftarrow$\SampleWithReplacement{$\mathcal{D}, \delta$}\;
\tcc{A kernel is fit for each sample. Hyper-parameter selection is done for each sample. This computation can be parallelized.}
\fsample$\leftarrow$\FitGP{$N_s$}\;

} 
\tcc{the estimate for a point $x \in \mathcal{D}_{test}$ (the test dataset) is the average of the estimates from the K estimators fitted above.}
\fom $\leftarrow \frac{1}{K}\sum_{i=1}^{i = K} \hat{f}_i(x)$ 
\caption{Gaussian Process Regression Using Resampling}\label{algo_resamp}
\end{algorithm}\DecMargin{1em}

The proposed algorithm is based on model averaging as described in \cite[Chapter 14]{bishop2006pattern}. This is similar to combining estimates from regression trees in the Random Forest(\cite{breiman2001random}) algorithm. Given $i = \{1,2,\hdots, K\}$ models, the conditional distribution of the response at a point $X_j$ is obtained from:
\begin{equation}
p(Y|X_j) = \sum_{i =1}^{i = K} p(Y|i, X_j).p(i).
\end{equation}
If each of the models is equally probable, then $p(i) = \left(\frac{1}{K}\right)$. If each of $p(Y|i,X_j)$ is a Gaussian $\mathcal{N}(\mu_i, \Sigma_i)$, as would be the case when each of the estimators is a GP based on Equation \ref{eqn:gp_model}, then we have the following:
\begin{equation}\label{eqn:model_averaging}
p(Y|X_j) \thicksim \mathcal{N}( \mu_C(X_j),\quad \sigma ^2_{C}(X_j)),
\end{equation}
Where:
\begin{description}
\item $\mu_C(X_j) = \frac{1}{K}. \sum_{i=1}^{i=K} \mu_i(X_j), $
\item $\sigma^2_C(X_j) = \frac{1}{K^2} \sum_{i=1}^{i=K} \sigma ^2 _{i}(X_j).$
\end{description}
Model combination is an alternative approach to combining estimates from a set of models. The product of experts model is an example of this approach. In the product of experts approach, each model is considered to be an independent expert. The conditional distribution of the response at a point $X_j$ is obtained from:

\begin{equation}\label{eqn:poe_eqns}
p(Y|X_j) = \frac{1}{Z} \prod_{i = 1}^{i=K} p_i(Y|X_j), 
\end{equation}
\noindent where $Z$ is the normalization constant. \cite{cao2014generalized} report a study where the individual experts are Gaussian Processes. In this case, each of $p_i(Y| X_j)$ are $\mathcal{N}(\mu_i(X_j),\  \sigma^2_h(X_j))$. The mean and variance associated with $p(Y|X_j)$ are:
\begin{align} \label{eqn:poe}
\mu_{POE}(X_j) &= \sigma^2_{POE} (X_j).\left(\sum_{i=1}^{i=K}\mu_i(X_j)T_i(X_j)\right), \\ 
T_{POE}(X_j) &= \sum_{i=1}^{i=K} T_i(X_j),\\
\sigma^2_{POE} (X_j) &= \left(T_{POE}(X_j) \right)^{-1}.
\end{align} 
Here $T_i(X_j)$ is the precision of the expert $i$ at point $X_j$. The results from using a product of experts model are also included in this study (see section \ref{ards}). 

Model averaging and model combination (Equation \eqref{eqn:model_averaging} and Equation \eqref{eqn:poe}) are simply ways to combine estimates from the component estimators used in Algorithm \ref{algo_resamp}. They do not specify any information about the character of the estimator developed using Algorithm \ref{algo_resamp}. To do this, we need the following preliminaries:

\newtheorem{assump}{Assumption}

\begin{assump}
\label{assump_1}
The function being modeled is in the Reproducing Kernel Hilbert Space of the kernel ($\sigma$) of the estimator. The reproducing property of the kernel can be expressed as:
\begin{equation*}
f(t) = <f(.), \sigma(., t)>  \quad \forall t \in T.
\end{equation*}
Here $T$ represents the index set to select the predictor instances.

\end{assump}

Algorithm \ref{algo_resamp} makes use of $K$ estimators, $f_1$, $f_2$, $\ldots$, $f_K$. The reproducing kernels associated with the estimators are 
$\sigma_1$, $\sigma_2$, $\ldots$, $\sigma_K$.  We show that the estimator resulting from combining these $K$ estimators is a Gaussian Process. 

\begin{lemma} \label{rkhs_lemma}
The estimator $f_{rm}$ obtained from the individual estimators $f_1$, $f_2$, $\ldots$, $f_K$ using:
\begin{equation*}
f_{rm}(t) = \frac{1}{K}\sum_{i=1}^{i=K} f_i(t), 
\end{equation*}
is associated with the following kernel:
\begin{equation} \label{estimator_kernel}
\sigma_{rm} = \frac{1}{K}\sum_{i=1}^{i=K} \sigma_{i}\quad .
\end{equation}
\end{lemma}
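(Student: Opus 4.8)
The plan is to read ``associated with the kernel $\sigma_{rm}$'' in the sense of Assumption \ref{assump_1}: I want to exhibit a reproducing kernel Hilbert space $\mathcal{H}_{rm}$ whose reproducing kernel is $\sigma_{rm}$ and then show that $f_{rm}$ belongs to $\mathcal{H}_{rm}$, so that the reproducing identity $f_{rm}(t) = \langle f_{rm}(\cdot),\ \sigma_{rm}(\cdot, t)\rangle_{\mathcal{H}_{rm}}$ holds. The central tool is Aronszajn's theory of reproducing kernels, in particular the two facts that (i) a positive scalar multiple $\lambda\sigma$ of a reproducing kernel is again a reproducing kernel, of the same space of functions but with norm rescaled by $1/\sqrt{\lambda}$, and (ii) a sum $\sigma_a + \sigma_b$ of reproducing kernels is the reproducing kernel of the sum space $\{g_a + g_b : g_a \in \mathcal{H}_a,\ g_b \in \mathcal{H}_b\}$ equipped with the infimal-convolution norm.

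First I would note that each $\sigma_i$, being a reproducing kernel (equivalently, a valid covariance function of the $i$-th Gaussian Process estimator), is symmetric and positive semi-definite. A positive linear combination of symmetric positive semi-definite kernels is again symmetric positive semi-definite, so $\sigma_{rm} = \frac{1}{K}\sum_{i=1}^{i=K}\sigma_i$ is a valid kernel and, by the Moore--Aronszajn theorem, determines a unique RKHS $\mathcal{H}_{rm}$ for which it is the reproducing kernel. This already guarantees that every element of $\mathcal{H}_{rm}$ reproduces through $\sigma_{rm}$; what remains is to place $f_{rm}$ inside $\mathcal{H}_{rm}$.

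Next I would characterize $\mathcal{H}_{rm}$ explicitly. Writing $\sigma_{rm} = \sum_{i=1}^{i=K}\tfrac{1}{K}\sigma_i$ and applying fact (i) to each summand followed by an iterated application of fact (ii), $\mathcal{H}_{rm}$ is the sum of the spaces $\mathcal{H}_i$ (the RKHS of $\sigma_i$) with norm
\begin{equation*}
\|f\|_{\mathcal{H}_{rm}}^2 = \min\left\{\,K\sum_{i=1}^{i=K}\|g_i\|_{\mathcal{H}_i}^2 \;:\; f = \sum_{i=1}^{i=K} g_i,\ g_i \in \mathcal{H}_i\,\right\}.
\end{equation*}
To show $f_{rm} \in \mathcal{H}_{rm}$ it then suffices to produce one admissible decomposition, and the natural choice $g_i = \tfrac{1}{K} f_i$ does the job: each $g_i$ lies in $\mathcal{H}_i$ because $f_i$ does by Assumption \ref{assump_1}, and $\sum_i g_i = \tfrac{1}{K}\sum_i f_i = f_{rm}$. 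Hence $f_{rm} \in \mathcal{H}_{rm}$, and combining this with the conclusion of the previous step yields the reproducing identity with $\sigma_{rm}$, which is exactly the assertion that $f_{rm}$ is associated with $\sigma_{rm}$.

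I expect the main obstacle to be the norm bookkeeping in the sum space rather than any single deep step. When the component spaces $\mathcal{H}_i$ overlap, the norm on $\mathcal{H}_{rm}$ is an infimal convolution (a minimum over decompositions) rather than a plain direct sum, so care is needed to state it correctly and to track how the $1/K$ scaling from fact (i) propagates through the iterated use of fact (ii). For the membership of $f_{rm}$ only the existence of one decomposition is needed, so the overlap does not obstruct the argument; it would matter only if one also wanted the exact value of $\|f_{rm}\|_{\mathcal{H}_{rm}}$, which the statement does not require.
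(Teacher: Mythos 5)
Your proof is correct, but it takes a genuinely different---and more rigorous---route than the paper's. The paper's proof is a one-line linearity computation: invoking Assumption \ref{assump_1}, it writes $\langle f(\cdot), \sigma_i(\cdot,t)\rangle = f(t)$ for every $i$, where $f$ is the \emph{modeled} function, and then expands $\langle f(\cdot), \sigma_{rm}(\cdot,t)\rangle = \frac{1}{K}\sum_{i=1}^{i=K}\langle f(\cdot), \sigma_i(\cdot,t)\rangle = \frac{1}{K}\,K\,f(t) = f(t)$. Note two things about that argument: it never mentions $f_{rm}$ at all (it establishes that the averaged kernel retains the reproducing property for the target function $f$), and it silently treats the $K$ inner products as living in one common Hilbert space, although each $\sigma_i$ determines its own RKHS with its own inner product. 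Your proof addresses exactly these points by constructing the sum space: Moore--Aronszajn for the existence of $\mathcal{H}_{rm}$, Aronszajn's scaling and sum theorems for its explicit description with the infimal-convolution norm, and the explicit decomposition $g_i = f_i/K$ to place the estimator $f_{rm}$ itself inside $\mathcal{H}_{rm}$. What your approach buys is a conclusion that is actually about $f_{rm}$, as the lemma's statement demands, with all inner products well defined; what it costs is the extra hypothesis $f_i \in \mathcal{H}_i$, which you attribute to Assumption \ref{assump_1}. That attribution is a slight misreading: the assumption, as written, places the modeled function $f$ (not the estimators $f_i$) in each RKHS. The fact you need is nevertheless true and easy to supply---each $f_i$ is a GP posterior mean of the form $\mathcal{K}(\cdot,X)\bigl[\mathcal{K}(X,X)+\sigma_n^2\mathbf{I}\bigr]^{-1}y$, a finite linear combination of the sections $\sigma_i(\cdot,x_j)$, hence an element of $\mathcal{H}_i$---so your argument stands once this justification replaces the appeal to the assumption.
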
 

\begin{proof}
From Assumption ~\ref{assump_1}, the kernel associated with each estimator has the reproducing property. So the following equation holds:
\scriptsize
\begin{equation*}
<f(.), \sigma_1(.,t)> = <f(.), \sigma_2(.,t)> = \ldots = <f(.), \sigma_K(.,t)> = f(t)\ .
\end{equation*}
\normalsize
Consider the kernel $\sigma_{rm} = \frac{1}{K}\sum_{i=1}^{i=K} \sigma_{i}$ and the inner product $<f(.), \sigma_{rm}>$. The inner product can be written as:
\scriptsize
\begin{align*}
<f(.), \sigma_{rm}> &= \frac{1}{K}\overbrace{[<f(.), \sigma_1(.,t)> + \ldots +
<f(.), \sigma_K(.,t)>]}^{K\quad times}\\
&= \frac{1}{K}\overbrace{[f(t) + \ldots +
f(t)]}^{K\quad times}\\
&=\frac{1}{K}\  K\  f(t)\\
&= f(t)\ .
\end{align*}
\normalsize
\end{proof}

Using Lemma \eqref{rkhs_lemma}, the estimator developed using Algorithm \ref{algo_resamp} is a Gaussian Process. This Gaussian Process is associated with a kernel defined by Equation \eqref{estimator_kernel}. This kernel is defined as a mixture of the kernels used by the individual estimators. The uncertainty estimate at any point $X$ can be obtained using Equation \eqref{eqn:gp_var_est}.

Consistency of estimators is concerned with the asymptotic properties of the developed estimator as we increase the sample size. Consistency of Gaussian Processes  has been widely studied (see \cite[Chapter ~7, Section 7.1]{Rasmussen06gaussianprocesses}) for details. The convergence rate for the algorithm is a characteristic of great interest. This characteristic tells us the rate at which we can drop the error as we increase the sample size. Minimax theory provides a framework to assess this (see \cite{tsybakov2009introduction} or \cite{gyorfi2006distribution}) . A famous result by \cite{stone1982optimal} states the minimax rate of convergence for non-parametric regression is $n^{-\frac{\alpha}{(2\alpha + d)}}$, where $\alpha$ describes the smoothness of the function being modeled and $d$ represents the dimensionality of the feature space. This suggests that the rate of convergence is affected by the curse of dimensionality. However as noted by \cite{yang2015minimax}, the following factors usually mitigate the curse of dimensionality:
\begin{enumerate}
\item The data lie in a manifold of low dimensionality even though the dimensionality of the feature space is large. \cite{yang2016bayesian} reports a method for such problems.
\item The function being modeled depends only on a small set of predictors. All datasets reported in this study had this characteristic. Feature selection can be performed in Gaussian Processes using Automatic Relevance Determination  (see \cite[Chapter 5, Section 5.1]{Rasmussen06gaussianprocesses}).
\item The function being modeled admits an additive structure with univariate components. The minimax rate of convergence for this problem is very attractive (see \cite{stone1985additive}). For all datasets reported in this study such an additive model yielded good results.
\end{enumerate}
Feature Selection is therefore a critical first step. This is discussed in section \ref{ards}. The additive structure and the dependence on a very small set of predictors suggest that we can get reasonable models with a small subset of the data. This was consistent with the experimental results reported in this work. Simple preprocessing like Principal Component Analysis (PCA) could be used to reduce the number of relevant features. PCA also makes the features independent. The data for this study came from public repositories and from very diverse application domains. This suggests that datasets with these characteristics are not uncommon. When data lie in a manifold, methods such as \cite{yang2016bayesian} may be more appropriate.

\cite{zaytsev2017minimax} investigate the minimax interpolation error under certain conditions (known covariance, stationary Gaussian Process) . As noted in this recent study, theoretical work in estimating the convergence rate using minimax theory is an active area of research. In this work we investigated empirical estimation of the sample size. We describe two methods to determine the sample (subset) size. 

\subsection{Estimating Subset Size on the Basis of Inference of a Proportion} \label{sec:emp_exp_est} 

Since $\delta$ takes values between 0 and 1, it can be interpreted as a proportion. We treat it as a random variable that can be inferred from a small sample. 
To estimate $\delta$, we do the following: We pick a small sample of the original dataset by simple random sampling. We start with a  small value of $\delta$ and check if the prediction error with this value of $\delta$ is acceptable. If not we increment $\delta$ until we arrive at a $\delta$ that yields an acceptable error. This procedure yields the smallest $\delta$ value that produces an acceptable error on this sample. Since the size of this dataset is small, the above procedure can be performed quickly. This technique yielded reliable estimates of $\delta$ on both synthetic and real world datasets.

\subsection{Empirical Estimation of the Subset Size}\label{sec:emp_est_sasanka}
Equation \ref{eqn:sasanka_estimator} provides the functional form for the second empirical estimator of the subset size $N_s$. Appropriate choices for $\delta(N)$ and $g(\epsilon)$ are based on observations from  the experimental evaluation of the parameters of Algorithm \ref{algo_resamp}. Choices that provided good results are provided in section \ref{sec:effect_dataset_size}.

\section{Related Work} \label{pw}
\cite[Chapter~8]{Rasmussen2005} provides a detailed discussion of the approaches used to apply Gaussian Process regression to large datasets. \cite {quinonero2007approximation} is another detailed review of the various approaches to applying Gaussian Processes to large datasets. The choice of a method appropriate for a regression task is dependent on the problem context. Therefore we discuss the work related to scaling Gaussian Process regression taking the problem context into consideration.\\

If the data associated with the problem has been well studied and kernel methods have been successfully applied to the problem, then we may have reasonable insights into the nature of the kernel appropriate for the regression task. We may be able to arrive at the kernel hyper-parameters quickly from a small set of experiments. On the other hand if the problem and data is new, then we may not have a lot of information about the kernel. In general, scaling Gaussian Process regression to large datasets has two challenges:
\begin{enumerate}
\item Finding a kernel that has good generalization properties for the dataset.
\item Overcoming the computational hurdles - $O(N^3)$ for training and $O(N^2)$ for storage.
\end{enumerate}

Learning a kernel that has good generalization properties is a related area of research in Gaussian Processes (see \cite{wilson2014thesis}, \cite{wilson2013gaussian}. When a good kernel representation has been learned, there are many techniques to overcome the computational hurdles.  The Nystrom method to approximate the Gram matrix (\cite{drineas2005nystrom}) and the Random Kitchen Sinks (\cite{rahimi2008random}) are probably the most well known. The Random Kitchen Sinks approach  maps the data into a low dimensional feature space and learns a linear estimator in this space. It should be noted that these methods work well when the problem needs a stationary kernel for which we know the hyper-parameters. Using "sensible defaults" for hyper-parameters and applying these techniques to problems that require a non-stationary kernel may yield poor results. For example with the airline dataset, described later in this study (see section \ref{sec:datasets}), the Random Kitchen Sinks cannot be used directly and would require suitable preprocessing (like removing simple trends or using a mean function) so that a stationary kernel would be applicable. Using the Random Kitchen Sinks directly with no preprocessing and using default kernel choices provided with the scikit-learn \cite{scikit-learn} implementation yielded poor results (RMSE of 31.49 as opposed to 8.75 with the proposed method).\\ 
Using a kernel learning approach to determine a good kernel representation and then solving the computational hurdles independently is one way to approach scaling Gaussian Process regression to large datasets. Another approach to determine the appropriate kernel is to use exploratory data analysis. Guidelines to pick kernels based on exploratory analysis of the data is provided in \cite{duvenaud2014automatic}. This is a practical approach when the number of relevant features is not too many, as was the case with the datasets used in this study. It should be noted that hyper-parameters for these choices still need to be specified. We may be able to build additive models using this approach. Appropriate preprocessing could help, for example principal component analysis can be applied to make the features independent. Minimax theory for non-parametric regression indicates that the convergence rate for additive models is very attractive. We can build effective models with a small proportion of the data.\\
The choice of kernel hyper-parameters is critical and can affect the performance. When datasets are large and the kernel hyper-parameters are unknown, we need algorithms that can address both these issues. Ideally, the algorithm should be able to work with both stationary and non-stationary kernels. The proposed algorithm is one such candidate.
Sparse Gaussian Processes (\cite{titsias2009variational}) and Stochastic Variational Gaussian Processes (\cite{hensman2013gaussian}) are two others. Like the proposed algorithm, these algorithms require the specification of a input size. A subset of points is selected from the dataset for training. The criteria for subset selection is different in each case. These algorithms do not require the specification of the kernel hyper-parameters. These are estimated from the data. Stochastic Variational Gaussian Processes can require considerable manual tuning of the optimization parameters. Typical implementations (like \cite{gpy2014}) for Sparse GP and Stochastic Variational GP use stochastic gradient descent for hyper-parameter optimization. This explores the entire dataset in batch size increments. \cite{hensman2013gaussian} report the details associated with picking the parameters for the optimization task (learning rates, momentum, batch sizes etc.). In contrast, sample sizes with the proposed algorithm even for datasets with over million instances are typically small (order of few hundred instances). Learning hyper-parameters over small datasets is considerably easier. The experiments reported in this work required no tuning effort. We report the performance of Sparse Gaussian Process, Stochastic Variational Gaussian Process and the proposed method on a variety of datasets in section \ref{ards}\\

The proposed algorithm uses ideas that have proven effectiveness with other machine learning techniques. Bagging has been used to improve performance using regression trees (Random Forests, \cite{breiman2001random}). Like with Random Forests, the algorithm uses model averaging to combine estimates from component Gaussian Process Regressions. Dropout (\cite{srivastava2014dropout}) is a technique used in neural networks to prevent over fitting. Dropping random units achieves regularization in neural networks. In the proposed algorithm, selecting a sample can be viewed as dropping random instances from the training dataset. Sparse Gaussian Processes and Stochastic Variational Gaussian Processes use theoretical ideas to select a small subset of points to develop a Gaussian Process regression model. This study suggests that combined with model averaging, random selection of the subset can also work well.\\

For datasets with a Cartesian product structure, by imposing a factorial design of experiment scheme on the dataset and decomposing the covariance matrix as a Kronecker product \cite{belyaev2016computationally} discuss an approach to scaling Gaussian Process regression. This approach also uses a prior on the hyper-parameters to deal with anisotropy. So the implementation is quite complex. Using a divide an conquer strategy is another theme in scaling Gaussian Process regression to large datasets. The Bayesian Committee  Machine (BCM) (\cite{tresp2000}), is an idea related to the algorithm presented in this work. The BCM proposes a partition of the dataset into $M$ parts. An estimator is developed on each partition. The BCM does not choose a subset of the partition. It uses the entire partition for developing the estimator. This is the key difference between the method proposed in this work and the BCM. The estimates from each estimator are assumed to be independent. The BCM assumes that computing a GP solution on the partitions of the dataset is computationally tractable because the partition sizes are small. Datasets encountered today are much larger than those reported in \cite{tresp2000}. In present day datasets the partitions of the dataset based on guidelines provided in \cite{tresp2000} would be very big and computing a full Gaussian Process solution on them may not be computationally tractable. Even when the partition size is not big enough to create computational hurdles, using all the data may result in over fitting. Using a hierarchical model as in \cite{deisenroth2015distributed} or \cite{park2010hierarchical} are possible ways to work around the size of the partitions, however this requires a complex implementation to partition and recombine computations. 

The Locally Approximate Gaussian Process \cite{gramacy2015local} fits a local Gaussian Process for a prediction point using a local neighborhood and local isotropy assumption. This method too requires some tuning, the size of neighborhood and method to choose the neighbors are important parameters. For datasets where the local isotropy assumption works well and when the size of the test set is small, this method might be useful. When prediction is required at a large number of test points, this method might be slow if we use a large neighborhood. For example with the airline dataset, described in section \ref{sec:datasets}, using the defaults provided with the laGP (\cite{laGPR}) package did not yield good results on the airline delay dataset (RMSE of 24.89 as opposed to 8.75 with the proposed method). The running time for laGP was also considerably longer. Scoring the test set in batches of 15000 rows, the test set prediction took about 50 minutes for the airline delay dataset. The proposed algorithm builds a single model that is used to score the entire test set and completed in about 6.5 minutes.\\
In summary, there are several ways to scale Gaussian Process regression to large datasets. The choice of a particular method should be guided by the characteristics of the problem. The method proposed in this work is appropriate for large datasets that have a small number of important features for which the kernel characteristics are unknown. In such cases exploratory data analysis can be used to determine appropriate kernel types. Additive models may work well for such datasets. Preprocessing such as principal component analysis can be used if needed to make features independent (so that we can use additive models). Stochastic Variational Gaussian Processes and Sparse Gaussian Processes are also good candidates for such problems. Kernel hyper-parameters are learned from the data by these methods. Results of the experiments conducted as part of this study show that the proposed method can match or exceed the performance of the Sparse Gaussian Process or the Stochastic Variational Gaussian Process.

\section{Effect of the Parameters}\label{sec:effect_of_parameters}
Selection of algorithm parameters appropriate for a machine learning task is an arduous task for all practitioners. To alleviate this difficulty, we provide guidelines for parameter selection based on detailed experimentation. The proposed algorithm has three parameters:
\begin{enumerate}
\item The dataset size
\item The subset size
\item The number of estimators
\end{enumerate}
Accordingly, three sets of experiments were performed to capture the effect of each of these parameters on the performance of the algorithm. These experiments are described in this section. 

\subsection{Datasets}\label{sec:datasets}
The following datasets were used in this study:
\begin{enumerate}
\item \textbf{Combined Cycle Power Plant}: This dataset was obtained from the UCI Machine Learning repository (\cite{Lichman}). This dataset has 9568 instances. The target variable is the net hourly electrical power output from a power plant. The dataset has four features.
\item \textbf{Ailerons}: This dataset was obtained from the LIAD(Laboratory of Artificial Intelligence and Decision)(\cite{LIACC}). The target variable for this dataset is the control action associated with the control of a F-16 aircraft. The dataset has 40 features and 7154 instances.
\item \textbf{Elevators}: This dataset was obtained from the LIAD repository (\cite{LIACC}). This dataset is also related to the control of a F-16 aircraft. The target for this dataset is the control action variation for the elevators of the aircraft. The dataset has 6 features and 9517 instances.
\item \textbf{California Housing}: This dataset was obtained from the LIAD repository (\cite{LIACC}). The target variable for this dataset is the median house price. The dataset has 8 features and 20460 instances
\item \textbf{Individual Household Electric Power Consumption}: This dataset was obtained from the UCI Machine Learning repository (\cite{Lichman}). It captures the electric power consumption in a single household over a four year period at a one minute sampling rate. For the experiments in this study, the Voltage was treated as the target variable. Seasonality and periodicity are important characteristics of this dataset (identified during exploratory data analysis of this dataset). For this reason, minute and hour attributes were created out of the time stamp attribute. Similarly, day of week and day of month attributes were created out of the date attribute. This dataset has over 2 million instances and 12 features.
\item \textbf{Airline Delay}:This dataset was obtained from the US Department of Transportation's website (\cite{RITA_Delay_Data_Download}). The data represents arrival delays for US domestic flights during January and February of 2016. This dataset had 12 features and over two hundred and fifty thousand instances. Departure delay is included as one of the predictors while \cite{hensman2013gaussian} does not include it. Also the raw data includes a significant amount of early arrivals (negative delays). For all regression methods considered in this study, better models were obtained by limiting the data to the delayed flights only (arrival delays were greater than zero). This suggests that arrival delays and early arrivals are better modeled separately.

\item \textbf{The Sinc Function}: This is a one dimensional synthetic dataset where the response variable is the sine cardinal function otherwise called the sinc function (noise free). The sinc function is a complex function to learn and is therefore a candidate for this as well as many other machine learning research studies. The dataset had one hundred thousand instances.
\end{enumerate}

\subsection{Effect of Dataset Size}\label{sec:effect_dataset_size}
These experiments study the effect of the size of the dataset ($N$) on the subset size ($\delta$). For each dataset, we pick a fraction of the data elements and determine the subset size ($N^\delta$) required to achieve a target accuracy. The target accuracy is an input parameter for the experiment. We repeat this procedure for various settings of the fraction of the dataset selected ($0.1$ through $1$). The number of estimators for these experiments was maintained at 30. The rationale for this choice is provided in section \ref{sec:effect_num_est}. The results are shown in Figure \ref{fig:DEDVSN} through \ref{fig:SFDVSN}.
\begin{figure}[ht]
\begin{multicols}{2}
\includegraphics[width=\linewidth]{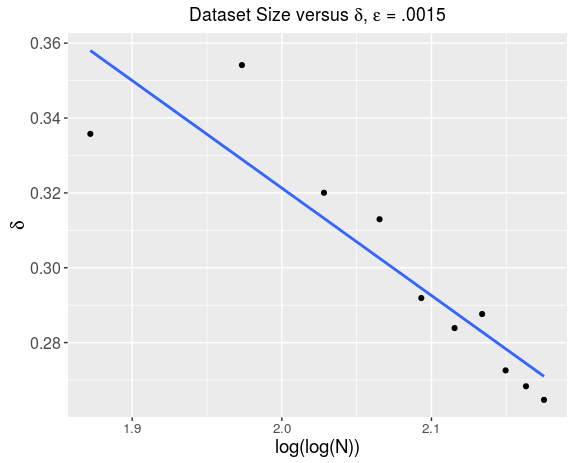}\par\caption{\tiny{Delta Elevators}}
\label{fig:DEDVSN}
\includegraphics[width= \linewidth]{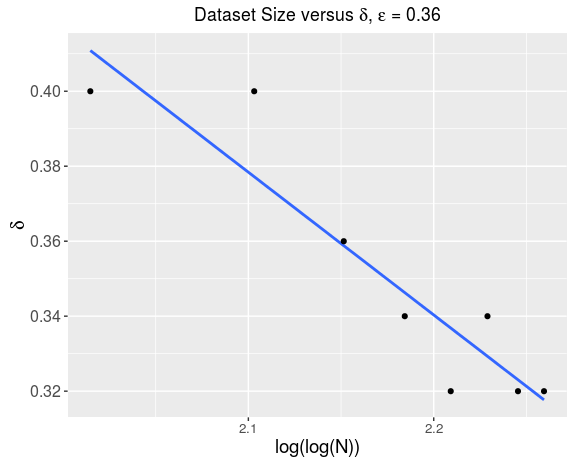}\par\caption{\tiny{California Housing}}
\label{fig:CHDVSN}

\end{multicols}
\begin{multicols}{2}

\includegraphics[width= \linewidth]{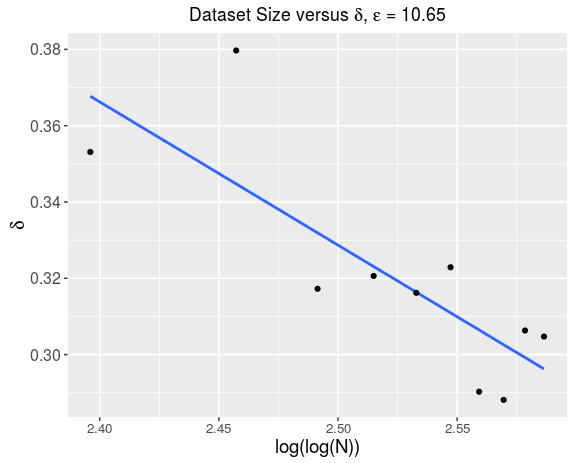}\par\caption{\tiny{Airline Delay}}
\label{fig:ADDVSN}
\includegraphics[width= \linewidth]{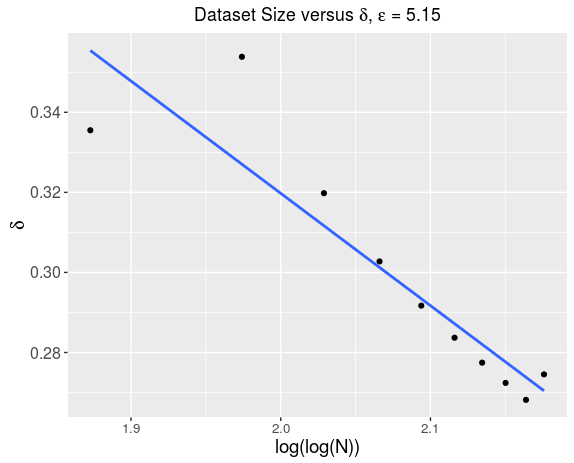}\par\caption{\tiny{Combined Cycle Power Plant}}
\label{fig:PPDVSN}

\end{multicols}
\end{figure}

\begin{figure}[ht]

\begin{multicols}{2}

\includegraphics[width= \linewidth]{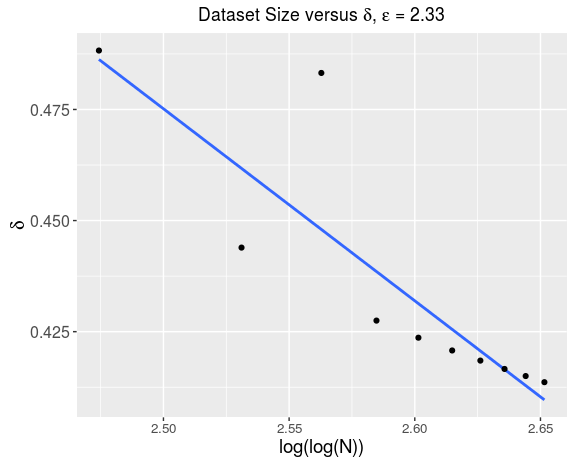}\par\caption{\tiny{House Hold Power Consumption}}
\label{fig:HPDVSN}
\includegraphics[width= \linewidth]{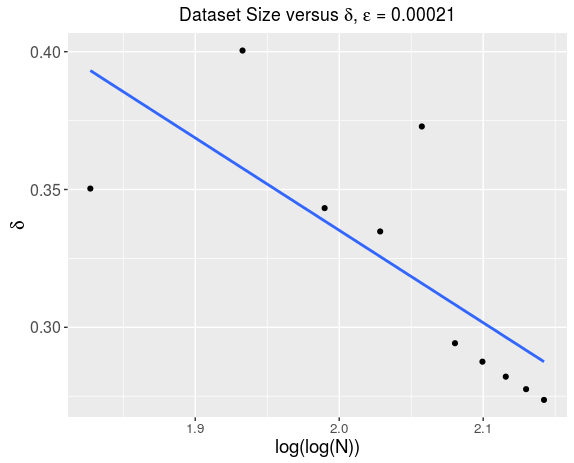}\par\caption{\tiny{Ailerons}}
\label{fig:ALDVSN}
\end{multicols}
\end{figure}

\begin{figure}[H]

\includegraphics[scale = 0.4]{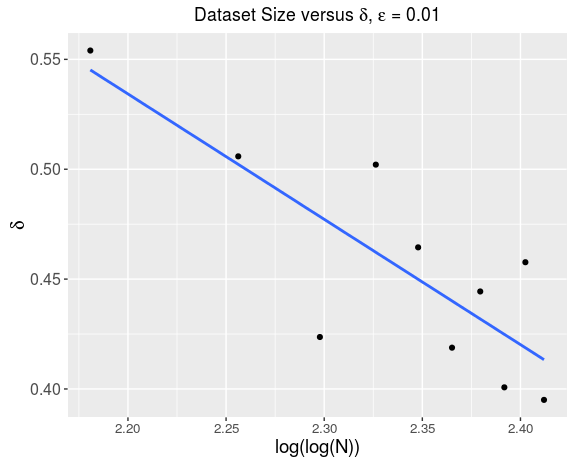}\caption{\tiny{Sinc Function (Synthetic Data)}}
\label{fig:SFDVSN}

\end{figure}
The key observation from these experiments was that the $\delta$ required to maintain a preset accuracy decreases very slowly as $N$ increases. This set of experiments was used to identify candidate choices for the parameters of Equation \ref{eqn:sasanka_estimator}. Since we wanted a function $\delta(N)$, that decreases slowly as $N$ increases, we considered $\{\frac{1}{\sqrt{N}}, \frac{1}{\log(N)}, \frac{1}{\log(\log(N))}\}$. These are slowly decreasing functions of $N$ in decreasing order of slowness. After a rigorous empirical study we found that if we choose $\delta(N) = \frac{1}{\
\log(\log(N))}$ then it works well on all real world datasets and synthetic data.  As discussed in section \ref{ps}, the time complexity of the proposed algorithm is $O(K.N_{s}^{3})$ or $O(K.N^{3.\delta(N)})$. The exponent of $N$ decreases monotonically as $N$ increases. Since $N$ can be expressed as $2^{2^{x}}$, for $N > 2^{2^x}$, the running time is  $ < O(K.N^{\frac{3}{x}})$. The number of estimators $K$, determined empirically, is a constant (about $30$) for all experiments. The rationale for this choice is explained in section \ref{sec:effect_num_est}. The time complexity of the GP computation is therefore  $O(K.N^{\frac{3}{x}})$. So when N is large enough (such that $ x > 3$), the GP computation is sub-linear. For example, when $N = 2^{2^4}$, the time complexity is  $O(K.N^{\frac{3}{4}})$. $g(\epsilon)$ is a monotonically increasing function that characterizes the fact that sample size should increase as the acceptable error threshold 
$\epsilon$ decreases (e.g. $ g(\epsilon) = {\epsilon, \sqrt{\epsilon}, \epsilon^{\frac{1}{10}},\ldots}$). An optimal choice of $g(\epsilon)$ is an area of 
future work. For the experiments reported in this work, $g(\epsilon) = C\epsilon^{\frac{1}{10}}$ with $C = 1$ for low noise datasets ($RMSE << 1$) and $C=0.5$ for noisy datasets ($RMSE > 1$), worked well.

\begin{figure}[H]
\begin{multicols}{2}
\includegraphics[width=\linewidth]{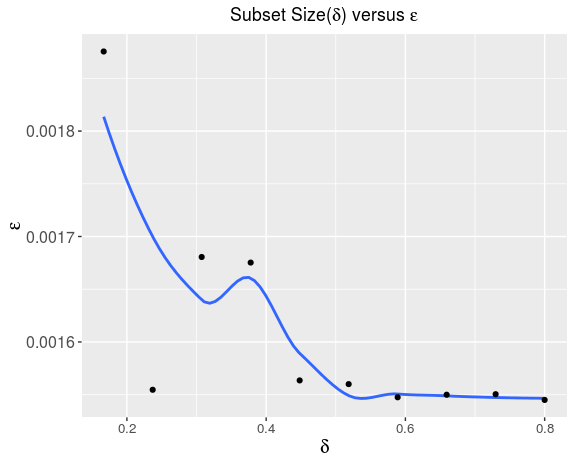}\par\caption{\tiny{Delta Elevators}}
\label{fig:DEEVSD}
\includegraphics[width=\linewidth]{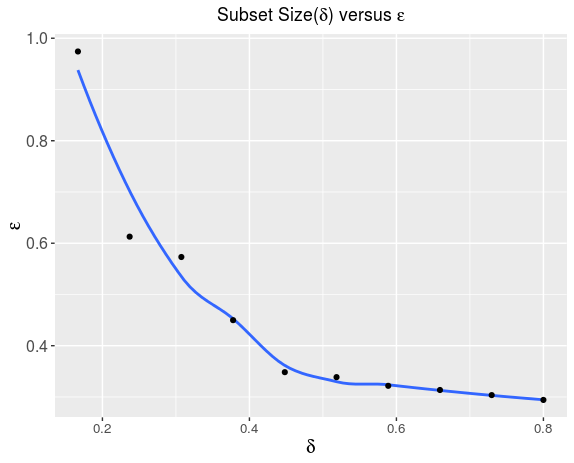}\par\caption{\tiny{California Housing}}
\label{fig:CHEVSD}

\end{multicols}
\end{figure}
\begin{figure}[H]
\begin{multicols}{2}
\includegraphics[width=\linewidth]{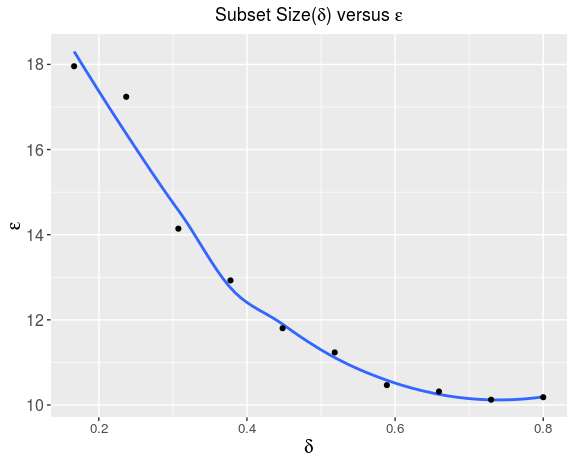}\par\caption{\tiny{Airline Delay}}
\label{fig:ADEVSD}
\includegraphics[width=\linewidth]{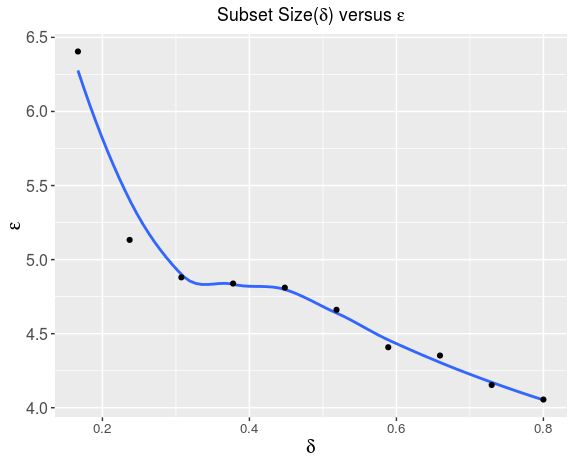}\par\caption{\tiny{Combined Cycle Power Plant}}
\label{fig:PPEVSD}

\end{multicols}
\end{figure}

\begin{figure}[H]
\begin{multicols}{2}

\includegraphics[width=\linewidth]{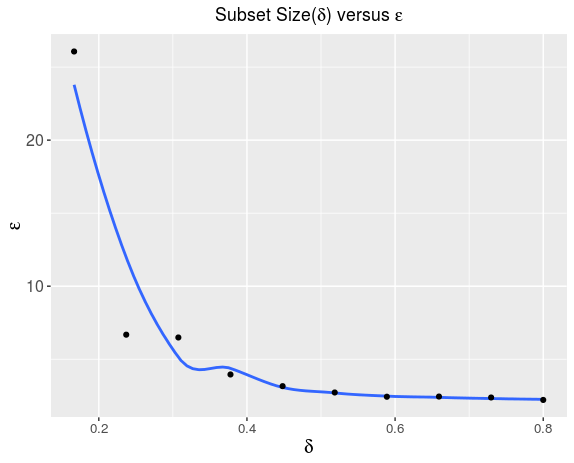}\par\caption{\tiny{House Hold Power Consumption}}
\label{fig:HPEVSD}
\includegraphics[width=\linewidth]{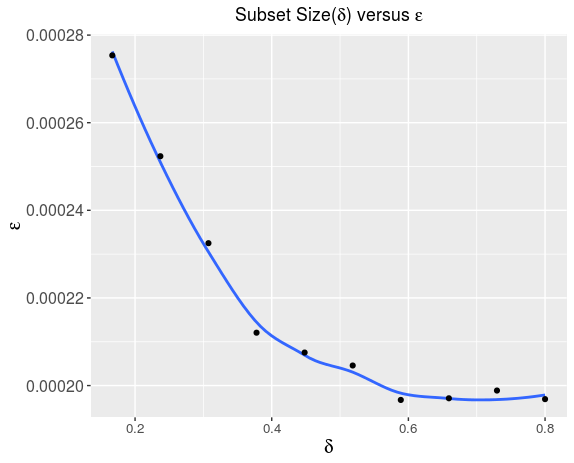}\par\caption{\tiny{Ailerons}}
\label{fig:ALEVSD}
\end{multicols}
\end{figure}

\begin{wrapfigure}{r}{0.4\textwidth}
\includegraphics[scale = 0.4]{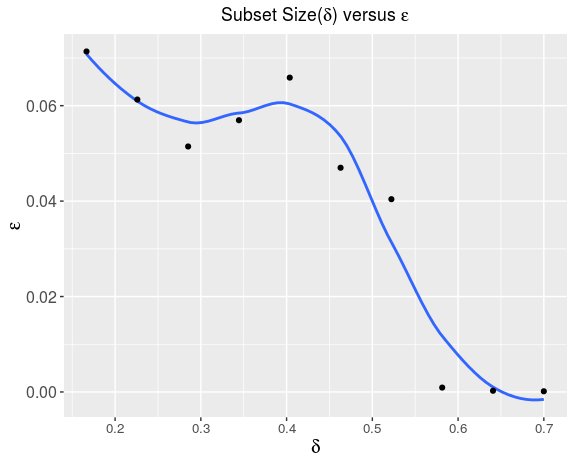}\caption{\tiny{Sinc Function (Synthetic Data)}}
\label{fig:SFEVSD}
\end{wrapfigure}

\subsection{Effect of Subset Size}\label{sec:effect_subset_size}
These experiments explore the effect of the subset size, captured by the parameter $\delta$,  on the accuracy ($\epsilon$), for a fixed dataset size ($N$). For each dataset, a fraction of the dataset is picked for this experiment. This represents the data for the experiment ($N$). We pick $N$ such that $N^{1.0}$ (i.e, $\delta = 1.0$) is computationally tractable (about 2000).  The subset $N_s$, used for Gaussian Process model development was $N^\delta$. We fix the number of estimators to be 30 (see section \ref{sec:effect_num_est} for rationale). For each $\delta$ value in a range of values, we record the RMSE ($\epsilon$). The key insight from these experiments was that complex functions, like the Sinc function (see Figure \ref{fig:SFEVSD}) needed a larger subset size to produce a given level of accuracy. Note that we can expect the $\delta$ to drop with the increase in dataset size because of the effect reported in section \ref{sec:effect_dataset_size}.  Section \ref{sec:accuracy} provides the $\delta$ values associated with the full dataset. The results for these experiments are shown in Figure \ref{fig:DEEVSD} through \ref{fig:SFEVSD}.

\subsection{Effect of Number of Estimators}\label{sec:effect_num_est}

These experiments capture the effect of the number of estimators ($K$) on the accuracy of the algorithm ($\epsilon$), for a particular set of $\delta$ and $N$ values.

\begin{figure}[H]
\begin{multicols}{2}
\includegraphics[width=\linewidth]{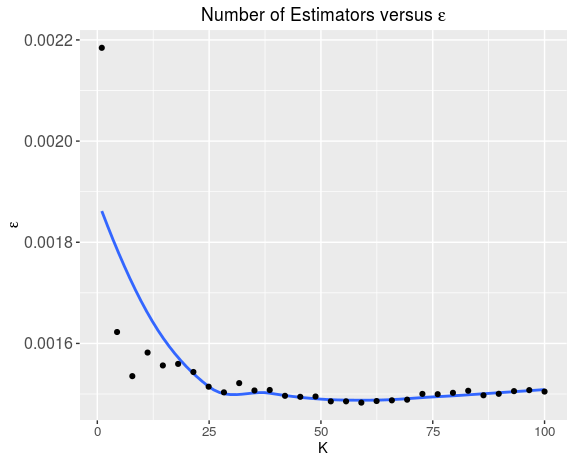}\par\caption{\tiny{Delta Elevators}}
\label{fig:DEKVSE}
\includegraphics[width=\linewidth]{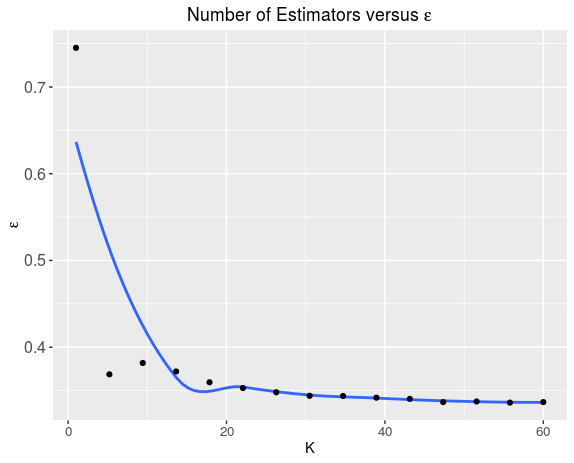}\par\caption{\tiny{California Housing}}
\label{fig:CHKVSE}
\end{multicols}
\end{figure}
\begin{figure}[H]
\begin{multicols}{2}
\includegraphics[width=\linewidth]{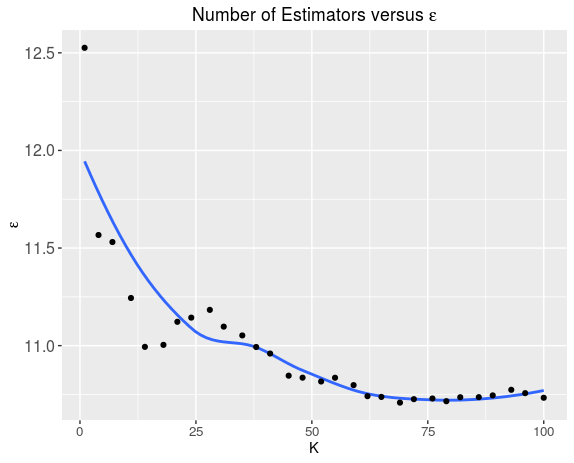}\par\caption{\tiny{Airline Delay}}
\label{fig:ADKVSE}
\includegraphics[width=\linewidth]{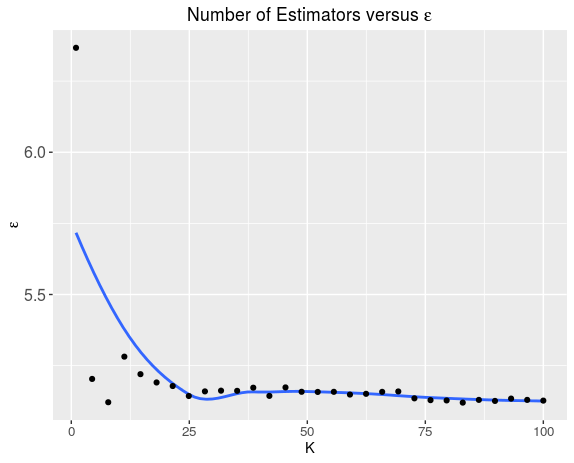}\par\caption{\tiny{Combined Cycle Power Plant}}
\label{fig:PPKVSE}
\end{multicols}
\end{figure}
\begin{figure}[H]
\begin{multicols}{2}
\includegraphics[width=\linewidth]{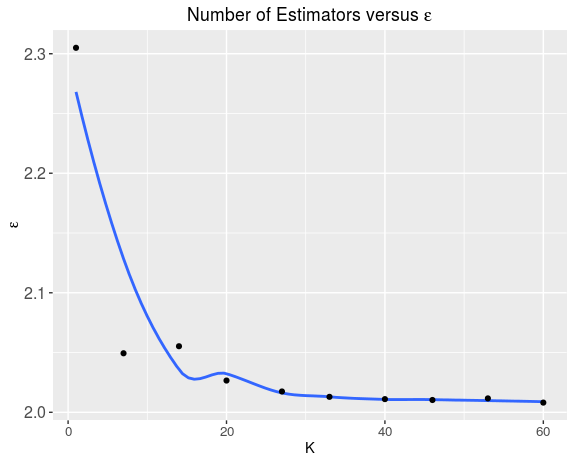}\par\caption{\tiny{House Hold Power Consumption}}
\label{fig:HPKVSE}
\includegraphics[width=\linewidth]{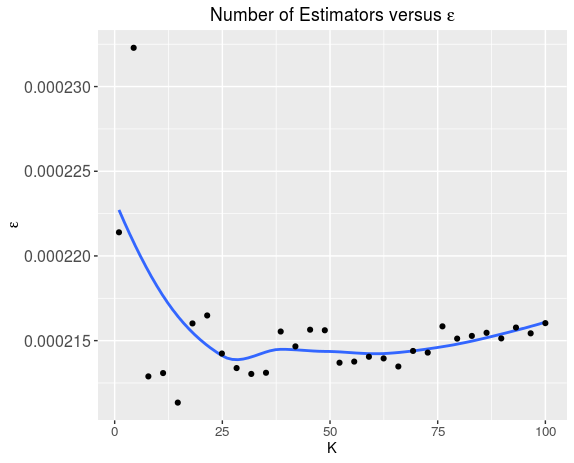}\par\caption{\tiny{Ailerons}}
\label{fig:ALKVSE}
\end{multicols}
\end{figure}
\begin{wrapfigure}{r}{0.4\textwidth}
\includegraphics[scale = 0.4]{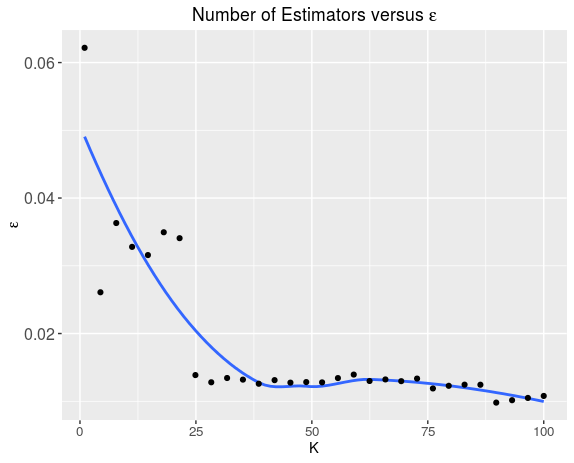}\caption{\tiny{Sinc Function (Synthetic Data)}}
\label{fig:SFKVSE}
\end{wrapfigure}

For each dataset the following experiment was performed. The fraction of the dataset to use for the experiment ($N$) and the subset size ($\delta$) are selected. For each $K$ in a range of values, Algorithm \ref{algo_resamp} is applied and the error ($\epsilon$) is recorded. The key insight from this set of experiments was that given a subset size($N_s$), there is a point upto which increasing the number of estimators ($K$) drops the error, but once a threshold value is reached (around $30$, for all the datasets), there is little benefit in increasing the number of estimators. A plausible explanation for this behavior could be that increasing the number of estimators reduces the variance component of the error in the bias-variance decomposition of the error($\epsilon$). The results for these experiments are shown in Figure \ref{fig:DEKVSE} through \ref{fig:SFKVSE}.

\section{Application of the Algorithm} \label{ards}
In this section we describe the results of applying the algorithm reported in this work to the datasets described in section \ref{sec:datasets}.
\subsection{Independent Performance Assessments} \label{sec:independent_assements}
It may of interest to see how the estimates from Gaussian Progress regression compare to estimates from other methods for large regression tasks. We report the performance of two methods. XGBoost (\cite{chen2016xgboost}) is a recent tree based algorithm using the gradient boosting framework that is very scalable and gaining adoption among practitioners. Trees partition the predictor space and can account for interaction. This method uses boosting instead of bagging. Difference in accuracy estimates between XGBoost and the proposed method for a particular dataset could be attributed to the influence of boosting or effects of interaction among variables. In most cases the estimates from XGBoost were comparable to the estimates from the proposed method. As discussed in \ref{sec:stacking}, it is possible to combine estimates from XGBoost with the estimates from the proposed method using stacking. The Generalized Additive Model (GAM) (\cite{friedman2001elements}) is a scalable regression technique. As the name suggests, it fits an additive model in terms of smooth non-parametric functions (typically splines) of the predictor variables. The GAM estimate serves as an independent performance estimate from another non-parametric regression algorithm based on an additive model. In most cases the accuracy obtained from GAM's were similar to those obtained from Gaussian Process regression.
\subsection{Feature Relavance}
As discussed in section \ref{ps}, an additive model worked well for the datasets used in this study. Further, in all these datasets, the response depended only a small number of attributes. XGBoost can report feature importance. This summarized in Table \ref{tab:feature_imp} below. The response depends on a small subset of predictors in all the datasets reported in this study. GPy, (\cite{gpy2014}) the package used to implement the experiments reported in this work, implements the Automatic Relevance Determination (see \cite[Chapter 5]{Rasmussen06gaussianprocesses}) feature. Features that are not relevant are dropped from the model.   

\begin{wraptable}{r}{ 8cm}
\begin{savenotes}
\scriptsize
\begin{tabular}{|r|l|c|c|}
  \hline
 & Dataset & Features & Important Features \\ 
  \hline
1 & Airline Delay & 11 & 1 \\ 
  2 & Ailerons & 40 & 3 \\ 
  3 & Power Plant & 4 & 2 \\ 
  4 & Delta Elevators & 6 & 2 \\ 
  5 & California Housing & 8 & 3 \\ 
  6 & House Hold Power & 12 & 4 \footnotemark \\ 
   \hline
\end{tabular}
\caption{Feature Importance}

\label{tab:feature_imp}
\end{savenotes}
\end{wraptable}
\footnotetext{timestamp is counted as a feature. Variables derived from timestamp are not included in this count.}
\normalsize

\subsection{Accuracy}\label{sec:accuracy}
As discussed in section \ref{pw}, the choice of a GP method to use for a regression task depends on the problem context. The algorithm presented in this work does not require the hyper-parameter values to be specified. In terms of the inputs to the algorithm, the algorithm presented in this work is similar to the Sparse Gaussian Process and the Stochastic Variational Gaussian Process. These algorithms only require the dataset, kernel and the subset size as input. We report the performance of the Sparse Gaussian Process, Stochastic Variational Gaussian Process and the proposed algorithm for each dataset. These algorithms are good choices to begin the knowledge discovery process in large datasets. The results of applying Algorithm \ref{algo_resamp} to all the datasets listed in section \ref{sec:datasets} are shown in Table  \ref{tab:perf_accuracy}.
\begin{table}[ht] 
\scriptsize

\begin{tabular}{|l|r|r|r|r|r|r|r|r|}
  \hline
 Dataset & BM1 & BM2 & POE & SVGP & SPGP  & XGBoost & GAM & SD \\ 
  \hline
   Ailerons & 0.000214 & 0.000208 & 0.000220 & 0.000220 & 0.000220 & 0.000220 & 0.000200 & 0.000410 \\ 
   Delta Elevators & 0.001510 & 0.001450 & 0.001547 & 0.001460 & 0.001460 & 0.001432 & 0.001444 & 0.002370 \\ 
  CCPP & 4.32 & 4.24 & 4.27 & 5.19 & 4.10 & 3.73 & 4.11 & 17.07 \\ 
  Cal Housing & 0.293 & 0.294 & 0.293 & NA & NA & 0.240 & 0.281 & 0.569 \\ 
  Airline Delay & 8.75 & 8.75 & 8.74 & 8.85 & 10.94 & 8.74 & 9.45 & 31.31 \\ 
  HPC & 2.039 & 1.992 & 2.120 & NA & NA  & 1.63 & 2.18 & 3.24 \\ 
  Sinc Function & 0.0267 & 0.0200 & 0.0132 & 0.0627 & 0.0170 & NA  & NA & 0.0634 \\ 
   \hline
\end{tabular}
\caption{Performance Accuracy}
\label{tab:perf_accuracy}

\end{table}
\normalsize
For each dataset, Table \ref{tab:perf_accuracy} includes the following columns:
\begin{enumerate}
\item BM1 (Bagging Method 1): This is the result of applying Algorithm \ref{algo_resamp} using the method specified in section \ref{sec:emp_exp_est} to select the subset.
\item BM2 (Bagging Method 2): This is the result of applying Algorithm \ref{algo_resamp} using the method specified in section \ref{sec:emp_est_sasanka} to select the subset.
\item POE: This is the result of applying the Product of Experts Algorithm (Equation \ref{eqn:poe}) instead of bagging. The subset size used is the same as that for Bagging (method 1).
\item SVGP: This is the result of applying the SVIGP algorithm. 
\item SPGP: This is the result of applying the Sparse GP algorithm. 
\item XGBoost: This is the result obtained from the XGBoost algorithm.
\item GAM: This is the result obtained from GAM.
\item SD: This is the standard deviation of the response. If we use a model that simply predicts the mean response for each test point, then the standard deviation represents the error associated with such a model. For regression models to be useful, they must perform better than this model.
\end{enumerate}

For all datasets the split between the training and the test sets was 70\% - 30\%. The reported accuracies are on the test set. The subset sizes required for each dataset is captured by the $\delta$ parameter. The $\delta$ values associated with Table \ref{tab:perf_accuracy} are presented in Table \ref{tab:dataset_deltas}. The $\delta$ values for the bagging methods and POE correspond the most accurate estimates we could obtain for the datasets on a laptop with 16 GB of RAM. The SVGP and the SPGP columns of Table \ref{tab:dataset_deltas}, represent the $\delta$ at which the best performance was obtained for the Stochastic Variational GP and the Sparse GP methods. For the airline dataset, the best performance for Stochastic Variational GP was obtained at a smaller subset size than the one used for the proposed method. For the airline dataset, Sparse GP showed no improvement in performance for $\delta$ greater than $0.33$

\begin{wraptable}{r} { 8cm}
\centering
\small
\begin{tabular}{|r|l|r|r|r|r|r|}
\hline
 & Dataset & BM1 & BM2 & POE & SPGP & SVGP\\ 
  \hline
 & Ailerons & 0.30 & 0.30 & 0.30  & 0.30 & 0.30\\ 
  2 & Delta Elevators & 0.30 & 0.30 & 0.30 & 0.30 & 0.30 \\ 
  3 & CCPP & 0.6 & 0.6 & 0.6 & 0.6 & 0.6 \\ 
  4 & Cal Housing & 0.625 & 0.625 & 0.625 & NA & NA \\ 
  5 & Airline Delay & 0.56 & 0.49 & 0.0.56 & 0.33 & 0.523\\ 
  6 & HPC & 0.43 & 0.38 & 0.43 & NA & NA \\ 
  7 & Sinc Function & 0.50 & 0.50 & 0.50 & NA & NA \\ 
   \hline
\end{tabular}
\caption{$\delta$ Requirements for the Datasets}
\label{tab:dataset_deltas}
\end{wraptable}
\normalsize
The kernels used for the datasets are shown in Table \ref{tab:dataset_kernels}.
\begin{wraptable}{r} { 7cm}
\centering
\scriptsize
\begin{tabular}{|r|p{2 cm}|p{3 cm}|}
  \hline
 & Dataset & Kernel \\ 
  \hline
1 & Ailerons & sum of linear and RBF \\ 
  2 & Delta Elevators & sum of linear and RBF \\ 
  3 & CCPP & sum of linear and RBF \\ 
  4 & California Housing & sum of PeriodicMatern32, Linear, RBF and a product kernel of Linear and RBF on the medianIncome attribute \\ 
  5 & Airline Delay & sum of BF, linear and White Noise \\ 
  6 & HPC & sum of Bias, Cosine, RBF, Linear and Brownian Motion \\ 
  7 & Sinc Function & RBF \\ 
   \hline
\end{tabular}
\caption{$\delta$ Kernels for the Datasets}
\label{tab:dataset_kernels}

\end{wraptable}
\normalsize
The experiments reported in this work used \cite{gpy2014} for implementation. The Sparse GP implementation in this package is based on the Variational Sparse GP method (see \cite{titsias2009variational}). The Stochastic Variational GP implementation is the one used in \cite{hensman2013gaussian}. Both these methods use Stochastic Gradient Descent for hyper-parameter learning. Kernel support for these methods is also limited to simple kernels and kernels like the Brownian Motion Kernel or the Periodic Matern kernel are not supported. For this reason, we do not report accuracy for SVIGP or the Sparse GP for datasets that required these complex kernels (Household Power Consumption and California Housing ). This also highlights the fact that Algorithm \ref{algo_resamp} may be a good candidate for datasets with the desired characteristics described earlier but requiring a complex kernel to model the underlying function.

An analysis of Table \ref{tab:perf_accuracy} shows that the proposed method performs as well as XGBoost or GAM's in most cases. In some cases, XGBoost does marginally better. In these cases we explored if it is possible to construct a better estimator using both these estimators. This is discussed next.

\subsection{Combining Estimators}\label{sec:stacking}
Combining estimators is not a new idea \cite{wolpert1992stacked}. More recently, \cite{lloyd2014gefcom2012} has combined gradient boosted tree models with Gaussian Process models. This prompted us to explore combining estimators for the data sets where the Gaussian Process model produced a slightly lower accuracy. There were three datasets where the performance of the Gaussian Process model was slightly lower than the XGBoost model. These were the Combined Cycle Power Plant, California Housing and the Household Power Consumption dataset. To combine estimators, the output of the XGBoost models and the GP models were used as the inputs to a classifier (the stacking classifier). The response from the classifier was the best model for a given set of XGBoost and GP model responses. A K-nearest neighbor classifier was used for this purpose. The best value of K was determined through experimentation. Given a test point, the estimates from the XGBoost model and the GP model can be obtained. The classifier then predicts best model for this set of estimates. This model is then used to provide the estimate for the test point. This procedure improved the accuracy for the California Housing and Combined Cycle Power Plant datasets. A K-nearest neighbor regression performed better than the K-nearest neighbor classifier for the Household Power Consumption dataset. The results are shown in Table  \ref{tab:stacked_estimates}
\begin{wraptable}{r} { 7cm}
\centering
\scriptsize

\begin{tabular}{|l|l|c|}
  \hline
 & Dataset & RMSE  \\ 
  \hline
 1 & California Housing & 0.211 \\ 
 2 & CCPP & 2.943 \\ 
 3 & HPC & 1.610 \\ 
   \hline
\end{tabular}
\caption{$\delta$ Accuracy using XGBoost and bagged GP models}
\label{tab:stacked_estimates}

\end{wraptable}
\normalsize

A comparison of Table \ref{tab:stacked_estimates} with Table \ref{tab:perf_accuracy} shows that combining estimators yields solutions that are more accurate. The idea of combining estimators can be refined in many ways. We have not included the Sparse GP and Stochastic Variational GP in this solution. The choice of the classifier or regression model to combine estimates from the component models is another modeling decision. The intent here is to show that it is possible to combine the GP model developed using the method presented in this work with other regression models to produce solutions that are better than the individual solutions. An optimal choice of the estimators and the method used for stacking is beyond the scope of this work.

\section{Conclusion} \label{dr}
There are many methods to scale Gaussian Process regression to large datasets. The appropriate choice depends on the problem context. The proposed method is appropriate for large datasets with a small set of important features for which the kernel characteristics are unknown. Kernel choices can be determined through exploratory data analysis \cite{duvenaud2014automatic}. Kernel hyper-parameters can be learned from the data. The data for the experiments reported in this work came from diverse application areas and in a wide range of sizes (few thousand to two million). In these datasets, the target variable depended on a small set of variables and an additive model matched the performance of models that permit interaction like XGBoost. This suggests that datasets with these characteristics are not uncommon. Results from Minimax theory for non-parametric regression indicate that additive models that depend on a small set of predictors have an attractive rate of convergence. This suggests that we can develop adequate regression models with a small subset of samples from the dataset. This was consistent with results observed in the experiments conducted as part of this study. The Stochastic Variational Gaussian Process and the Sparse Gaussian Process are also good candidates for problems with these characteristics.  The results of this study show that the proposed algorithm can match or exceed the performance of the Sparse Gaussian Process or the Stochastic Variational Gaussian Process. The results of this study also show that Gaussian Processes can be as effective as ensemble methods like Gradient Boosted Trees on large datasets.  Gaussian Processes are based on a probabilistic frame work and can provide uncertainty estimates directly as compared to other tools for large regression tasks like XGBoost. This could very important for some applications. For example, an analyst may be interested in the probability of a particular amount of delay given information for a particular flight. The regression function is also interpretable in the case of Gaussian Process models in contrast to methods like gradient boosted trees. Therefore Gaussian Process models can be good explanatory models as well as good predictive models. An important feature of this algorithm is the simplicity of implementation. In Internet applications that process continuous streams of data, frequent model development and deployment is needed. An algorithm that is simple but effective may fit these applications well. Finally it should be noted that it is possible to combine this algorithm with other algorithms like Gradient Boosted Trees using model stacking to achieve performance gains.
\section{References}
\bibliography{biggp}

\end{document}